\definecolor{Gray}{gray}{0.85}
\newtheorem{theorem}{Theorem}
\title{To Fold or Not to Fold: a Necessary and Sufficient Condition on Batch-Normalization Layers Folding}
\author{%
  Edouard Yvinec$^{1,2}$ \And Arnaud Dapogny$^2$ \And Matthieu Cord$^{2,3}$ \And Kevin Bailly$^{1,2}$  \\
  $^1$ Sorbonne Université, CNRS, ISIR, f-75005, 4 Place Jussieu 75005 Paris, France\\
  $^2$ Datakalab, 114 boulevard Malesherbes, 75017 Paris, France \\
  $^3$ Valeo, 100 rue de Courcelles, 75017 Paris, France \\
  \texttt{ey@datakalab.com}\\
  % examples of more authors
  % \And
  % Coauthor \\
  % Affiliation \\
  % Address \\
  % \texttt{email} \\
  % \AND
  % Coauthor \\
  % Affiliation \\
  % Address \\
  % \texttt{email} \\
  % \And
  % Coauthor \\
  % Affiliation \\
  % Address \\
  % \texttt{email} \\
  % \And
  % Coauthor \\
  % Affiliation \\
  % Address \\
  % \texttt{email} \\
}
\begin{document}

\maketitle
\begin{abstract}
    Batch-Normalization (BN) layers have become fundamental components in the evermore complex deep neural network architectures. Such models require acceleration processes for deployment on edge devices. However, BN layers add computation bottlenecks due to the sequential operation processing: thus, a key, yet often overlooked component of the acceleration process is BN layers folding. In this paper, we demonstrate that the current BN folding approaches are suboptimal in terms of how many layers can be removed. We therefore provide a necessary and sufficient condition for BN folding and a corresponding optimal algorithm. The proposed approach systematically outperforms existing baselines and allows to dramatically reduce the inference time of deep neural networks.

\end{abstract}
\section{Introduction}
Deep Neural Networks (DNNs) are ubiquitous in various sub-domains of computer vision, e.g. in Image Classification \cite{he2016deep}, Object Detection \cite{he2017mask} or Semantic Segmentation \cite{chen2017deeplab}.
Such models evolved from narrow architectures \cite{lecun1989backpropagation}, to deeper and more complex architectures \cite{tan2019efficientnet}.
The main concern in the early developments of very deep neural networks was the efficiency of the learning process \cite{krizhevsky2012imagenet} with batch-normalization (BN) layers \cite{ioffe2015batch} being a prime example of solution to this challenge. BN layers center and reduce the features which solves the internal covariate shift problem arising from the change of distribution of the layer inputs.
It was so effective that it became almost mandatory in modern architectures.

However, very deep neural network deployment is still limited due to their heavy computational requirements. To tackle this problem, many solutions for DNN acceleration have been developed including, but not limited to, pruning \cite{frankle2018lottery,lin2020hrank,yvinec2021red} and quantization \cite{zhao2019improving,meller2019same,finkelstein2019fighting}. Pruning consists in removing a fraction of the mathematical operations performed within each layers while quantization consists in reducing the temporal cost of scalar operations by using lower bit-wise representations. However most of these methods leverage the dependencies across consecutive convolutional and fully-connected layers. Those dependencies are affected by BN layers. This is the first motivation for BN folding. 
It consists in removing the BN layers from the graph by updating the remaining parameters to keep the predictive function unchanged. It facilitates the use of most pruning and quantization protocols.

The second motivation is straightforward: the lower the number of layers, the faster the inference, all other factors being equal, which is the case with BN folding \cite{jacob2018quantization}. Although BN operations adds very few parameters to the network, the induced computational cost is important because of the sequentiality of the operations. The folding process removes this bottleneck and without changing the predictive function but it is still often overlooked in the literature.
\begin{figure}[!t]
    \centering
    \includegraphics[width = \linewidth]{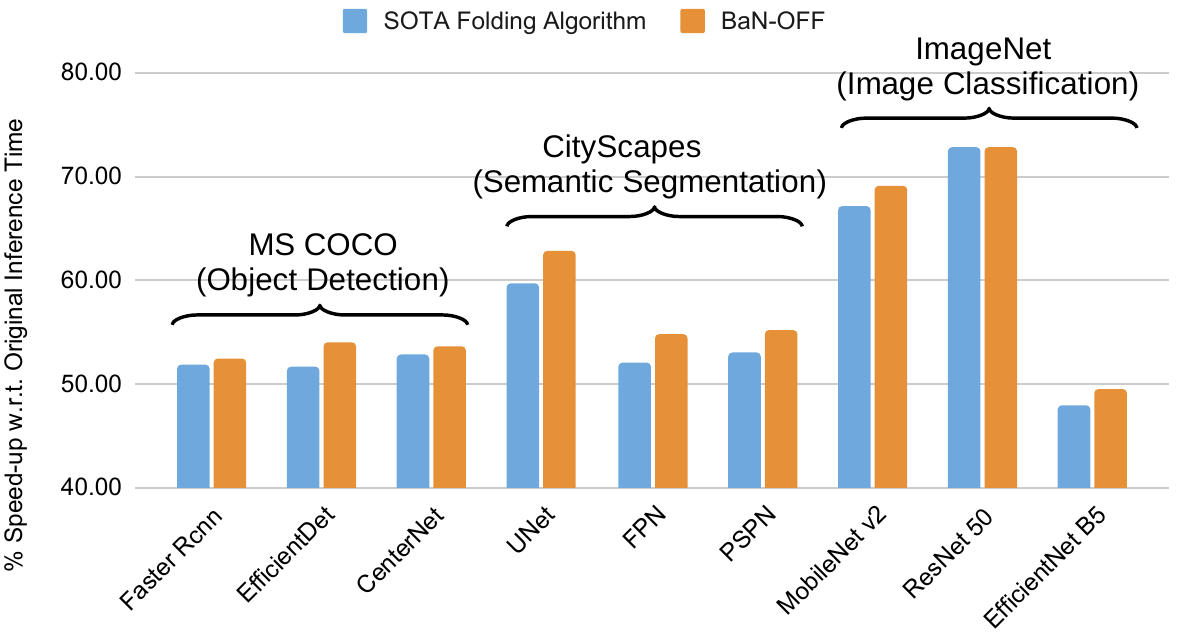}
     \caption{Comparison between the state-of-the-art BN folding algorithm and the proposed BaN-OFF method on several architectures and tasks. Both methods significantly improve the inference speed at virtually no cost. The proposed method systematically matches or outperforms the already existing method on all configurations.}
    \label{fig:main}
\end{figure}

For those two reasons, BN folding is usually presented in a naive way where BN layers are folded if and only if they are immediately connected to a convolutional layer or a fully-connected layer. Although, this condition is sufficient for a BN layer to be foldable, it is far from necessary meaning that it can't be optimal. In this study, we determine a necessary and sufficient condition for a BN layer to be foldable which translates into an algorithm for optimal batch-normalization layers folding, dubbed \textbf{BaN-OFF}.
Our tests are two-folds: first, we validate the systematic improvement over the standard folding protocol. Second, we demonstrate the importance as a compression step in more complex pipeline.
Our contributions are:
\begin{itemize}
    \item From a theoretical standpoint: a novel necessary and sufficient condition on the feasibility of batch normalization layers folding in any neural network architecture.
    \item From a practical standpoint: an algorithm which implements the proposed optimal condition and outperforms the existing folding methods at virtually no cost in pre-processing time.
    \item Outstanding inference acceleration results on several tasks and architectures and systematic improvement from the standard folding algorithm, see Figure \ref{fig:main}.
\end{itemize}
In this work, we focused on BN as defined by \cite{ioffe2015batch}, however, all our results extend to any normalization layers that implements static affine transformations such as Group Normalization \cite{wu2018group} or Switchable Normalization \cite{luo2019switchable}.

\section{Related Work}
Batch-normalization, introduced by Ioffe \textit{et al.} in \cite{ioffe2015batch}, had and continues to have a tremendous impact in deep learning. Prior to it, architectures such as VGG \cite{simonyan2014very} achieved remarkable results on ImageNet \cite{ImageNet_cvpr09} but were difficult to train. However as discussed in \cite{martin2019traditional} where the authors drew a comparison between the standard VGG and VGG-BN where additional BN layers are used. It appeared that such layers drastically improve the performance while simplifying the training process. Concurrently, these layers have been used in almost all novel architectures (\cite{ren2015faster,he2016deep,szegedy2016rethinking,lin2017feature,zhao2017pyramid,sandler2018mobilenetv2,zhou2019objects,tan2019efficientnet,tan2020efficientdet}) to the point that they are now considered an unavoidable part of the standard convolutional and fully-connected layers.

\paragraph{Inference Acceleration}
There are two main approaches to DNN inference acceleration: pruning and quantization.
Pruning consists in removing elements of the graph defined by the DNN \cite{renda2020comparing}. Quantization consists in mapping the weight values from continuous values to evenly-spread integer values using lower bit-wise representations (e.g. int8 or int4) \cite{krishnamoorthi2018quantizing}. Both approaches are often combined or even intrinsically leverage BN folding.

\paragraph{Batch-normalization Folding:} BN folding consists in removing such BN layers from the network's graph by updating the appropriate adjacent layers. This operation is performed if and only if the structural changes don't affect the predictive function associated to the network. The naive approach \cite{jacob2018quantization} is widely documented and applied in DNN acceleration. However it is not optimal as it doesn't fold every foldable BN layers. To solve this issue, we propose BaN-OFF, a necessary and sufficient condition for such layers to be folded as well as its corresponding algorithm.

\paragraph{Batch-normalization Folding and other Accelerations:}
 In \cite{yvinec2021red}, authors propose to prune similar neurons based on their weight representations. To preserve the predictive function they update the consecutive layers and to do so they perform BN folding. Other pruning mechanisms also leverage BN folding such as \cite{srinivas2015data,liu2017learning,ye2018rethinking}. \cite{nagel2019data} aim at balancing the weights, for quantization, across the entire network post BN folding and before performing quantization. Similarly, in \cite{zhao2019improving}, the method folds the BN layers before splitting outlier neurons in order to compactify the distribution of the weight values thus reducing the quantization error. 

Our novel algorithm allows to fold more BN layers than existing baselines, significantly reducing the inference time of deep neural networks. In addition, BAN-OFF allows to process more layers using existing pruning or quantization methods.

\section{Methodology}
Let $f$ be a feed forward neural network with $L$ layers ${(f_l)}_{l\in \{1,...,L\}}$. In this work, we define any layer as one of the following operations:
\begin{enumerate}
    \item expressive layers: a layer $f_l$ is expressive if and only if it can be expressed as the mathematical operation $f_l:x\mapsto W_lx +b_l$.
    \item non-affine layers: a layer $f_l$ is non-affine if and only if it corresponds to a mathematical operation that doesn't satisfy $f(ax+b) = \tilde a f(x) + \tilde b$.
    \item BN layers \cite{ioffe2015batch}: a layer $f_l$ of BN is defined, in the general case, by four parameters $\gamma_l$, $\beta_l$, $\mu_l$ and $\sigma_l$ such that $f:x\mapsto \gamma_l\frac{x-\mu_l}{\sigma_l + \epsilon}+\beta_l$ with $\epsilon$ a small constant positive scalar (usually $\epsilon=10^{-3}$).
    \item other layers.
\end{enumerate}
For instance, convolutional and fully-connected layers without their activation functions are the most common examples of expressive layers.
Non-affine layers correspond and are limited to the activation functions in most deep neural network architectures. Finally, pooling layers, concatenations, additions and flattening are examples of other layers commonly found in said architectures.

\subsection{Batch-Normalization Folding Limits}
\paragraph{Simplest Case:} let's assume $f$ is sequential with the following patterns: if $f_l$ is an expressive layer then $f_{l+1}$ is a BN layer and $f_{l+2}$ is a non-affine layer, \textit{i.e.} $f_{l+2}(f_{l+1}(f_{l}(x))) = \sigma\left(\text{BN}(W_lx+b_l)\right)$ where $\sigma$ is an activation function and $\text{BN}$ is a BN operation, see Figure \ref{fig:bn_folding} a). Under those assumptions all the BN layers can be folded using the naive BN-folding algorithm as follows: for each BN layer $l+1$ we remove the layer in the graph associated to the network and update the weights of $f_{l}$ with
\begin{equation}\label{eq:back_folding}
    \begin{cases}
    W_l \leftarrow \gamma_{l+1} \frac{W_l}{\sigma_{l+1} + \epsilon}\\
    b_l \leftarrow \gamma_{l+1} \frac{b_l - \mu_{l+1}}{\sigma_{l+1} + \epsilon} + \beta_{l+1}
    \end{cases}
\end{equation}
Then the resulting neural network is mathematically identical to the original network while requiring less operations to be computed.

\paragraph{Sequential Model:} Let's assume $f$ is sequential in the most general sense, \textit{i.e.} $f : x\mapsto f_L(\dots f_1(x))$, see Figure \ref{fig:bn_folding} b). The difference with the previous case is the possible presence of other layers between the BN layers and the expressive ones as well as the the possibility to have the activation before the BN. The naive BN-folding algorithm consists in folding the BN layers in the previous or next expressive layer as long as their are no intermediate non-affine layers. To fold a BN layer backward (\textit{i.e.} the expressive layer to update has a lower index) we use equation \ref{eq:back_folding}. On the other hand, to fold a BN layer forward (\textit{i.e.} the expressive layer to update has a higher index $l+k$) we use
\begin{equation}\label{eq:front_folding}
    \begin{cases}
    W_{l+k} \leftarrow \gamma_{l+1} \frac{W_{l+k}}{\sigma_{l+1} + \epsilon}\\
    b_{l+k} \leftarrow -\gamma_{l+1} \frac{\mu_{l+1}}{\sigma_{l+1} + \epsilon} + \beta_{l+1}W_{l+k} + b_{l+k}
    \end{cases}
\end{equation}
Under the sequential assumption, the algorithm folds all the BN layers that can be folded. Therefore this BN-folding algorithm is optimal in this specific case.

\paragraph{General DAG:} Let's assume the graph of $f$ is a directed acyclic graph (DAG), see Figure \ref{fig:bn_folding} c). Compared to a sequential model this corresponds to adding skip connections and possibly multiple leaves to the graph. In this case, the naive BN-folding algorithm consists in only considering sequential portions of the graph and limiting the resolution to the previous case. However a number of BN layers may be involved in non-sequential patterns as illustrated in Figure \ref{fig:bn_folding} c). Such BN layers can be folded by the proposed method.
As illustrated, this is done by folding the BN layer in the input expressive layers while updating adequately the the consecutive expressive layer (the one below the BN node in Figure \ref{fig:bn_folding} c).
For this reason the naive BN-folding algorithm is not optimal.
Now, we will define a necessary and sufficient condition for BN folding.

\begin{figure}[!t]
    \centering
    \includegraphics[width = 0.9\linewidth]{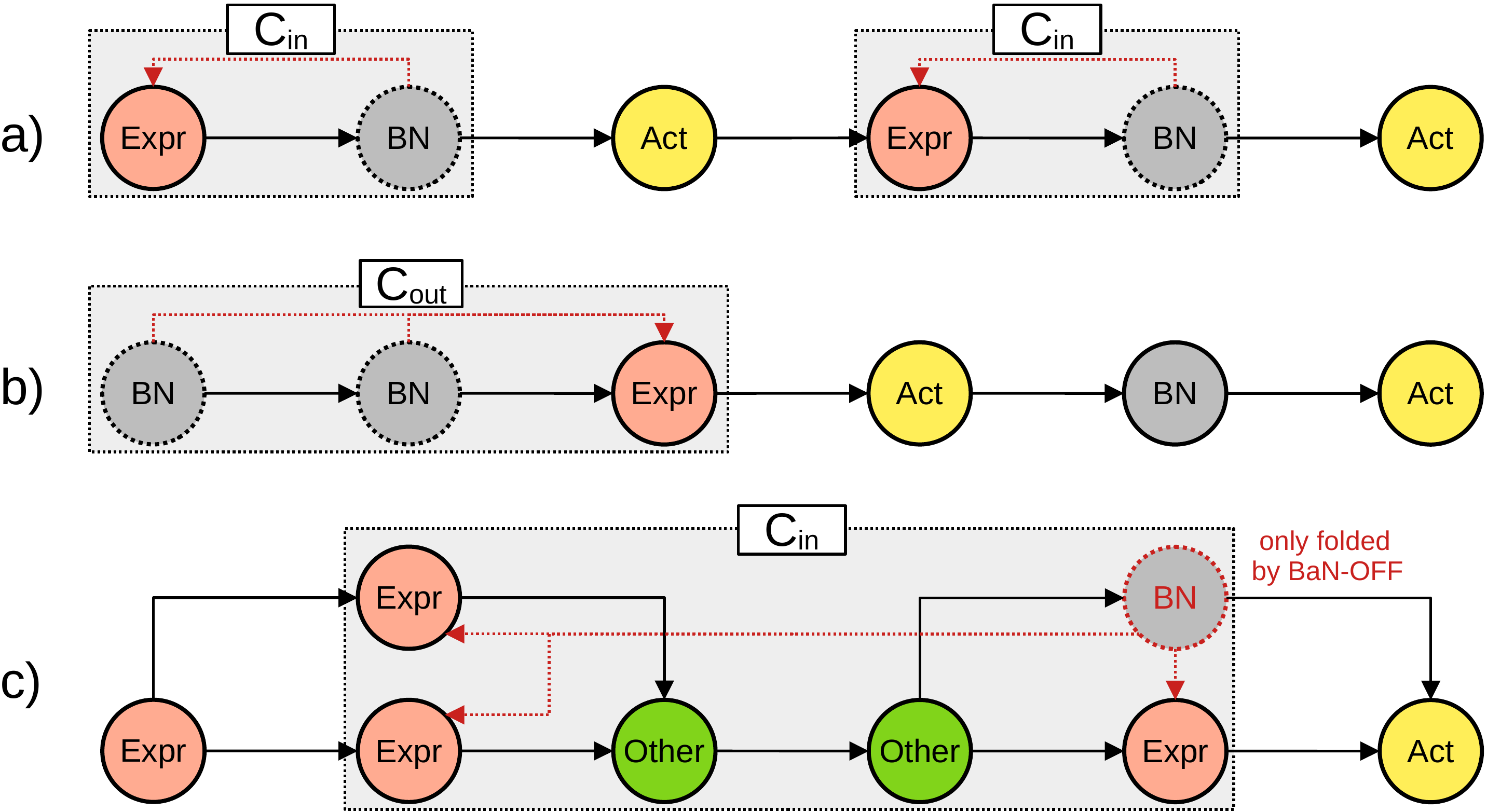}
     \caption{BN Folding of different neural networks archetypes. First a) a VGG with BN layers. Second an example of a general sequential model. Finally c) a network (DAG) with skip connections. Red nodes correspond to expressive layers, Yellow nodes to activation functions, grey nodes to BN layers and green nodes to other layers. The dark connections correspond to the graph associated to the network and dashed, red connections to the BN folding.}
    \label{fig:bn_folding}
    \label{fig:connected_comp}
\end{figure}
\subsection{Theoretical Folding Condition}
Every feed forward neural network architecture can be expressed as a DAG. Let $f$ be a DAG with $n$ nodes. We recall that, by definition, a layer $f_l$ is affine, if and only if $f_l(ax+b) = \tilde a f_l(x) + \tilde b$. The intuition is to isolate BN layers from non-affine layers.
To do so, we define the connected components $C$ of a node $N$ (corresponding to a BN layer) recursively. We start with the set $C=\{N\}$ and we add all the adjacent, affine nodes (\textit{i.e.} nodes not associated with non-affine layers). For each new, not-expressive, element of $C$ we repeat the process until there are no new elements.
The graph defined by $C$ is decomposed in the layers entering node $N$ and the layers leaving node $N$. We note the first part $C_{\text{in}}$ and the second $C_{\text{out}}$.
For a simple example (Figure \ref{fig:bn_folding} c), if we consider the VGG BN case and a BN layer $f_{l}$, we have $C_{\text{in}} = \{f_{l-1}, f_{l}\}$ with the input expressive layer $f_{l-1}$ and $C_{\text{out}} = \{f_{l}, f_{l+1}\}$ with $f_{l+1}$ the activation layer.
More examples of such connected components are illustrated in Figure \ref{fig:connected_comp}.
Then, for a given BN layer, the following theorem gives us our necessary and sufficient condition on the possibility to fold the layer. 
\begin{theorem}\label{thm:folding}
Let $f$ be a DAG associated to a neural network. A BN layer of node $N$ can be folded if and only if at least one of $C_{\text{in}}$ or $C_{\text{out}}$ satisfies:
\begin{enumerate}
    \item the set is not limited to $\{N\}$
    \item all the leaves (aside $N$) of the set are expressive layers 
\end{enumerate}
\end{theorem}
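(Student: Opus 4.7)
The plan is to prove the two implications separately, exploiting the fact that at inference BN acts per channel as a single affine map $x\mapsto \alpha x + \delta$ with $\alpha = \gamma_{l+1}/(\sigma_{l+1}+\epsilon)$ and $\delta = \beta_{l+1} - \gamma_{l+1}\mu_{l+1}/(\sigma_{l+1}+\epsilon)$; folding $N$ is therefore equivalent to redistributing the pair $(\alpha,\delta)$ elsewhere in the graph while preserving the predictive function.

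For sufficiency, assume without loss of generality that $C_{\text{out}}$ satisfies the two conditions. By construction, every node in $C_{\text{out}}$ other than $N$ and the expressive leaves is an affine ``other''-layer (sum, concatenation, average pooling, flattening). I would induct on the topological depth of $C_{\text{out}}$ rooted at $N$, showing at each step that $(\alpha,\delta)$ can be pushed past one such affine operation: addition by per-channel scaling via $\alpha(u+v)+\delta = \alpha u + \alpha v + \delta$, concatenation because the per-channel $(\alpha,\delta)$ acts on disjoint channel blocks, pooling because it commutes with scalar rescaling, and flattening because it is just a reindexing. After finitely many steps the scale and bias arrive at each expressive leaf $f_\ell$, where the forward-folding rule of Eq.~(\ref{eq:front_folding}) absorbs $\alpha$ into $W_\ell$ and the accumulated bias into $b_\ell$, freely redistributing the single $\delta$ across leaves when several are present. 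The symmetric argument for $C_{\text{in}}$ uses Eq.~(\ref{eq:back_folding}).

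For necessity, assume neither component satisfies the conditions. Then on both sides of $N$ every path leaving the component hits a non-affine node before reaching any expressive layer: either immediately, when the component reduces to $\{N\}$, or via an ``other''-leaf adjacent to a non-affine node. I would argue by contradiction: suppose a reassignment of the remaining weights reproduces the predictive function on every input. Restricting to a regime where the non-affine layers adjacent to $N$ on both sides are strictly monotone (e.g.\ positive preactivations for ReLU or GELU), one extracts a local functional equation in which the affine $(\alpha,\delta)$ sits sandwiched between two non-affine maps; since a nontrivial affine transformation does not commute with a non-affine one, one can exhibit two inputs whose outputs disagree under BN but are forced to agree after the reassignment, yielding the contradiction. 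The main obstacle is precisely this necessity step: sufficiency is essentially bookkeeping on affine compositions, but the impossibility argument must handle the general DAG, where skip connections, multiple branches and possibly further BN layers in the same component appear to offer extra degrees of freedom; the plan is to localise the obstruction to a single expressive-to-expressive path through $N$ and construct the counterexample witness there, keeping all other branches frozen.
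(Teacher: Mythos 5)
Your necessity sketch is in the spirit of the paper's own argument (the paper exhibits two paths sharing a source node, one reaching $N$ and one terminating at a non-expressive leaf, and argues the shared source cannot be modified without corrupting the second path); your case analysis of the negation is if anything more complete than the paper's, which only treats the two pure cases. The substantive problem is in your sufficiency argument, which misses the one structural situation the theorem exists to handle: leaves of the connected component that do \emph{not} lie on a path through $N$. Your induction pushes $(\alpha,\delta)$ past affine nodes by commutation, but commutation fails exactly at a reconvergence point. If an addition node receives $u$ from the BN branch and $v$ from a sibling branch, the original network computes $\alpha u+\delta+v$, whereas after deleting the BN any single affine map applied downstream of the sum gives $\alpha'(u+v)+\delta'$; matching coefficients forces $\alpha'=\alpha$ and $\alpha'=1$ simultaneously, so the push-through is impossible unless the BN is trivial. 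The identity $\alpha(u+v)+\delta=\alpha u+\alpha v+\delta$ you invoke describes the wrong situation (both summands already carrying $\alpha$). The same issue arises for $C_{\text{in}}$: folding backward into the producers changes a shared intermediate feature that sibling expressive consumers (the layer $W_4$ in the paper's Figure~\ref{fig:proof}~a) also read.

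The paper's proof resolves this by partitioning the leaves of the component into $I$ (those with a path to or from $N$), which absorb the BN via equations \ref{eq:back_folding} or \ref{eq:front_folding}, and $O$ (the rest), which receive the \emph{inverse} affine update (equation \ref{eq:update_back} and its forward analogue) so as to cancel the perturbation of the shared features. This is precisely why condition~2 of Theorem~\ref{thm:folding} must require \emph{all} leaves of the component to be expressive, not merely those on paths through $N$: the $O$-leaves must be able to host the compensating update. Without this $I$/$O$ split your sufficiency argument only re-establishes the sequential case already covered by the naive algorithm, and does not justify folding the DAG configurations (Figure~\ref{fig:bn_folding}~c) that motivate the theorem. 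To repair it, add a second half to your induction: at every node of the component with an in-neighbour (for $C_{\text{out}}$) or out-neighbour (for $C_{\text{in}}$) outside the paths through $N$, trace that branch to its expressive leaf — guaranteed to exist by condition~2 — and apply the inverse transformation there.
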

\begin{proof}
Here we will prove the necessity of the condition by assuming that either one of the condition is not satisfied by both $C_{\text{in}}$ and $C_{\text{out}}$. First, let's assume $C_{\text{in}}$ and $C_{\text{out}}$ are restricted to $\{N\}$, then it implies that the BN layer is "surrounded" by non-affine layers and thus can't be folded (second grey node of Figure \ref{fig:bn_folding} b). Second, let's assume that both $C_{\text{in}}$ and $C_{\text{out}}$ have at least one leaf that is not associated to an expressive layer aside from $N$ itself. Under this assumption, in $C$, there exists a path $f_{\rho(l+k)} \circ \dots \circ f_{\rho(l)}$ of length $k$ and a path $f_{\eta(l+k')} \circ \dots \circ f_{\eta(l)}$ of length $k'$ (see Figure \ref{fig:proof_illustration}) such that
\begin{enumerate}
    \item $f_{\eta(l+k')}$ corresponds to node $N$ and $f_{\eta(l+k')}\neq f_{\rho(l+k)}$
    \item $f_{\eta(l)}=f_{\rho(l)}$
\end{enumerate}
\begin{figure}[!t]
    \centering
    \includegraphics[width = 0.8\linewidth]{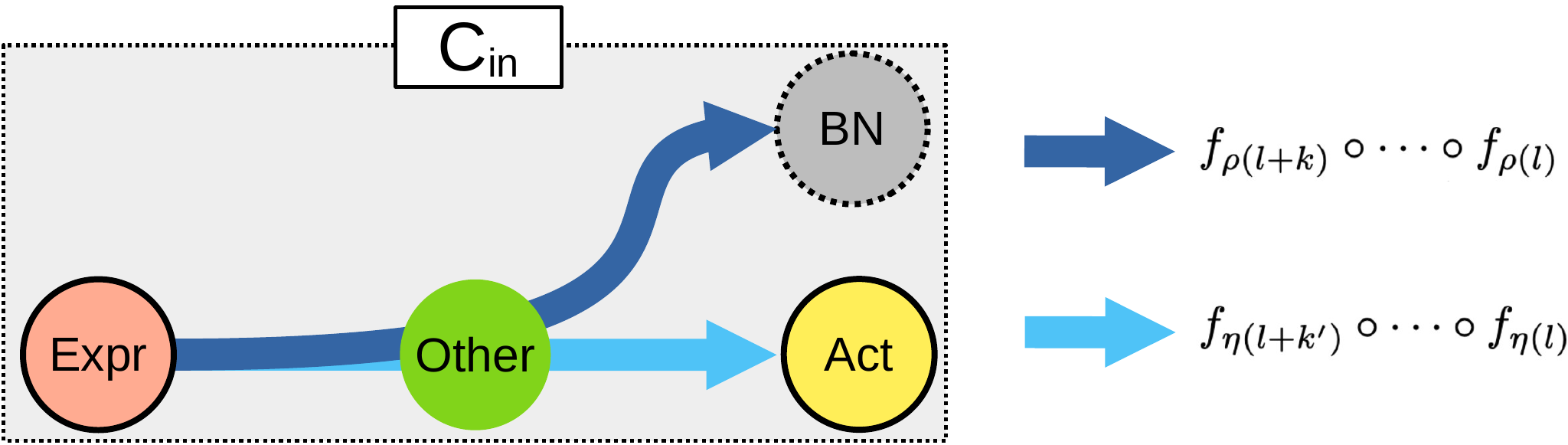}
     \caption{Two paths in the same $C_{\text{in}}$ sub-graph with different leaves $f_{\rho(l+k)}$ and $f_{\eta(l+k')}$. In such instance, the activation layer $f_{\rho(l+k)}$ receives the same processed features as the BN layer $f_{\eta(l+k')}$. Thus the folding can't be performed without altering the predictive function.}
    \label{fig:proof_illustration}
\end{figure}
By definition of $C$ all the consecutive layers of $f_{\rho(l+k)}$ are non-affine. Then if we fold $N$, the layer $f_{\eta(l)}$ won't compute the same operations and by definition, the path $f_{\rho(l)} \circ \dots \circ f_{\rho(l+k)}$ can't be updated to negate this effect. thus the node $N$ can't be folded. Such scenario is illustrated in Figure \ref{fig:proof} b).
\begin{figure}[!t]
    \centering
    \includegraphics[width = \linewidth]{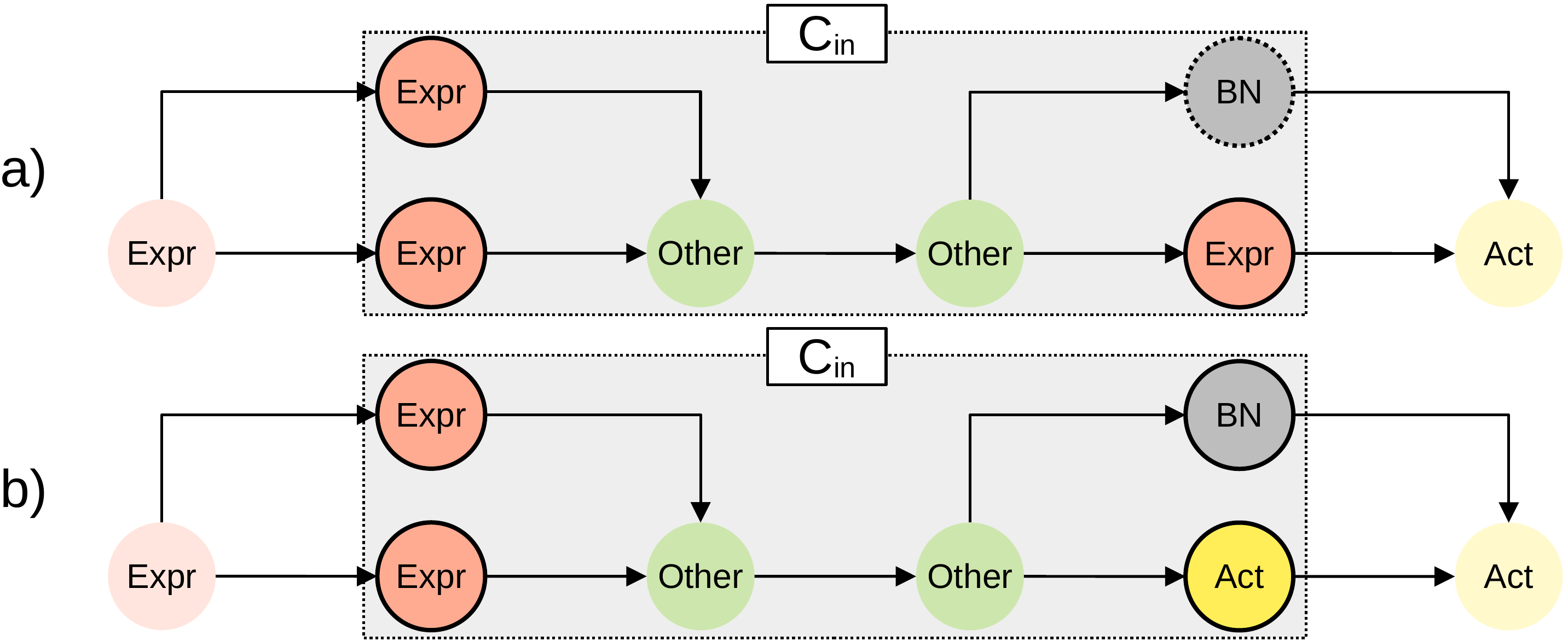}
     \caption{Critical examples of BN that can be folded a) and can't be folded b). The leaves of the sub-graphs $C_{\text{in}}$ are highlighted to show when the second condition of theorem \ref{thm:folding} is satisfied.}
    \label{fig:proof}
\end{figure}
This demonstrates the necessity of the proposed condition.
\end{proof}
Following this result, we can see in Figure \ref{fig:connected_comp} c) a BN layer that can be folded but not with the naive BN-folding algorithm.
To conclude our proof we will demonstrate that the condition is sufficient.

\subsection{Proposed Practical Algorithm}
Instead of an analytic proof we propose a constructive proof \textit{via} an algorithm which performs the BN folding as long as the condition from theorem \ref{thm:folding} is satisfied.
Let's assume $C_{\text{in}}$ satisfies the condition, then we separate the leaves in two categories: the ones with a path to node $N$, noted $I$ and the others $O$.
% In Figure \ref{fig:proof} a), the set $I$ corresponds to the red nodes pointed by a blue arrow and the others correspond to the single red node pointed by a purple arrow.
The nodes in $I$ are updated using equation \ref{eq:back_folding} and the others negate the previous update with the following update:
\begin{equation}\label{eq:update_back}
    \begin{cases}
    W \leftarrow \frac{W}{\gamma}(\sigma + \epsilon) \\
    b \leftarrow b + \frac{W\mu}{\sigma + \epsilon} \gamma - \beta W
    \end{cases}
\end{equation}
If $C_{\text{in}}$ doesn't satisfy the second condition from theorem \ref{thm:folding}, then we use equation \ref{eq:front_folding} to update $I$ and the following equation for $O$:
\begin{equation}
    \begin{cases}
    W \leftarrow \frac{W}{\gamma}(\sigma + \epsilon) \\
    b \leftarrow \frac{b}{\gamma}(\sigma + \epsilon) + \frac{\mu}{\sigma + \epsilon} \gamma - \beta
    \end{cases}
\end{equation}

Now we simply need to verify that any path in the graph associated to $f$ computes the same outputs before and after folding. As we only update $C$, it is enough to do the verification for any path in $C$. We limit the verification to the case study of Figure \ref{fig:proof} a).
Before BN folding, the graph corresponds to the mathematical function $f$. We note $W_i$ and $b_i$ the weights and biases of each expressive node of the graph. We note $\sigma$ the operation performed by the non-affine node and $g_1$ and $g_2$ the operations from the other nodes (green nodes). As such, we have
\begin{equation}
\begin{cases}
    G_1:x\mapsto g_1(W_2 (W_1 x + b_1) + b_2, W_3 (W_1 x + b_1) + b_3)\\
    G_2 :x \mapsto g_2(G_1(x))\\
    f:x \mapsto \sigma(BN(G_2(x))), \sigma(W_4G_2(x) + b_4)
\end{cases}
\end{equation}
where $G_1$ and $G_2$ correspond to the transformation of the inputs by the first green node and second green node, respectively. 
In this example, the BN layer satisfies the conditions of theorem \ref{thm:folding}.
We fold the BN layer in the layers of set $I$, \textit{i.e.} the second and third expressive layers, using equation \ref{eq:back_folding}. We note their weights and biases $\tilde W_i$ and $\tilde b_i$.
Finally, we adapt the layer of $O$, \textit{i.e.} the fourth expressive layer, to its new processed inputs using equation \ref{eq:update_back} and note $\bar W_4$ and $\bar b_4$.
The folded network will compute:
\begin{equation}
\begin{cases}
    \tilde G_1:x\mapsto g_1(\tilde W_2 (W_1 x + b_1) + \tilde b_2, \tilde W_3 (W_1 x + b_1) + \tilde b_3)\\
    \tilde G_2 :x \mapsto g_2(\tilde G_1(x)) = BN(G_2(x))\\
    \tilde f:x \mapsto \sigma(\tilde G_2(x)), \sigma(\bar W_4\tilde G_2(x) + \bar b_4)\\
    \qquad = \sigma(BN(G_2(x))) , \sigma(W_4 BN^{-1}(BN(G_2(x))) + b_4)\\
    \qquad = f(x)
\end{cases}
\end{equation}
where the folded operator $\tilde G_2$ is equal to $BN(G_2)$ before folding. Because we negated the folding in $W_4$ and $b_4$ the predictive function remains unchanged.

Consequently, the algorithm based on the proposed necessary and sufficient condition from theorem \ref{thm:folding} is optimal, \textit{i.e.} a BN layer can be folded if and only if 
\begin{enumerate}
    \item the set is not limited to $\{N\}$
    \item all the leaves (aside $N$) of the set are expressive layers 
\end{enumerate}
We now validate the empirical, cost-free, benefits resulting from BaN-OFF.
\section{Experiments}
\subsection{Toy}
We implemented the toy architectures described in Figure \ref{fig:bn_folding}. Table \ref{tab:toy} lists our results on an Intel CPU m3. The inference speed-up is measured as the following ratio $\frac{T_{\text{old}}-T_{\text{new}}}{T_{\text{old}}} \in [0;100\%]$ where $T_{\text{new}}$ is the inference time of the network after BN folding and $T_{\text{old}}$ is the original inference time, \textit{i.e.} the higher the better.

As expected the two algorithms, naive and BaN-OFF, share the same performance on the two first architectures as they perform the exact same transformations of the network. On the other hand, for the last configuration only the proposed method performed BN folding and resulted in inference speed increase.
We also observe that the reduction of inference time is not correlated with the proportion of removed parameters as BN layers don't require many parameters but may put sequential computations of the architecture on hold during the forward pass.
Furthermore, the predictions were only changed by up to $10^{-6}$ in $L^1$ norm on vector outputs of $1000$ elements.
This is due to numerical approximations.
Overall, this shows the importance of BN folding in neural networks acceleration as it doesn't change the predictive function and provide a significant boost in terms of speed.
\begin{table}[!t]
\caption{Comparison between the naive and proposed BN-folding algorithm in terms of increased inference speed on an Intel CPU m3. We also report the proportion of removed parameters.}
\label{tab:toy}
\centering
\setlength\tabcolsep{4pt}
\renewcommand{\arraystretch}{1.15}
\begin{tabular}{c|c|c|c}
\hline
Model & Naive & BaN-OFF & \% removed parameters\\
\hline
\hline
Figure \ref{fig:bn_folding} a) & \textbf{63.42} & \textbf{63.42} & 0.488 \\
Figure \ref{fig:bn_folding} b) & \textbf{51.44} & \textbf{51.44} & 0.488 \\
Figure \ref{fig:bn_folding} c) & 0 & \textbf{47.81} & 0.314 \\
\hline
\end{tabular}
\end{table}

\subsection{Main Results}
\begin{table}[!t]
\caption{Comparison between the naive and proposed BN-folding algorithm, on ResNets trained on Cifar10, in terms of percentage of relative reduced inference time on an Intel CPU m3. We also report the percentage of removed parameters.}
\label{tab:cifar10}
\centering
\setlength\tabcolsep{4pt}
\renewcommand{\arraystretch}{1.15}
\begin{tabular}{c|c|c|c}
\hline
Model & Naive & BaN-OFF & \% removed parameters\\
\hline
\hline
ResNet 20 & 30.64 & \textbf{35.04} & 0.61 \\
ResNet 56 & 43.19 & \textbf{50.97} & 0.63 \\
ResNet 110 & 49.21 & \textbf{54.36} & 0.63 \\
ResNet 164 & 34.35 & \textbf{39.20} & 0.63 \\
\hline
\end{tabular}
\end{table}

 % MIT license
\paragraph{Cifar10:} Table \ref{tab:cifar10} summarizes our results on ResNets \cite{he2016deep} trained on Cifar10 \cite{krizhevsky2009learning}. For both algorithms, the folded networks share identical predictions with the original models.
We conducted our first experiments on a small CPU as it corresponds to the most challenging use-case.
On a small device, sequential operations, e.g. BN layers, are not bottleneck compared to large expressive layers.
Furthermore, using a small device induces instability on the results for both algorithms. Standard deviation values (over the percentage of relative reduced inference time per frame) range from $3.97$ to $13.65$. However the proposed method systematically outperforms the naive BN-folding algorithm.

Furthermore, we confirm the previous observation on the decorrelation between the proportion of removed parameters by the acceleration method and the improvement in inference speed. 
We also observe that the inference boost can't be naively deduced from simple architecture properties such as depth and wideness.
\begin{table}[!t]
\caption{Comparison between the naive and proposed BN-folding algorithm, on models trained on ImageNet, in terms of percentage of relative reduced inference time on an Intel CPU m3. We also report the percentage of removed parameters.}
\label{tab:imagenet}
\centering
\setlength\tabcolsep{4pt}
\renewcommand{\arraystretch}{1.15}
\begin{tabular}{c|c|c|c}
\hline
Model & Naive & BaN-OFF & \% removed params\\
\hline
\hline
MobileNet & 24.35 & \textbf{29.01} & 0.97\\
\hline
ResNet 50 & \textbf{6.48} & \textbf{6.48} & 0.61 \\
ResNet 101 & \textbf{10.34} & \textbf{10.34} & 0.63 \\
ResNet 152 & \textbf{21.31} & \textbf{21.31} & 0.63 \\
\hline
EfficientNet B0 & 17.10 & \textbf{18.27} & 0.79\\
EfficientNet B1 & 33.88 & \textbf{35.95} & 0.80\\
EfficientNet B5 & 19.93 & \textbf{21.38} & 0.57\\
\hline
\end{tabular}
\end{table}

 %under the BSD-3 license
\paragraph{ImageNet:} Table \ref{tab:imagenet} contains our results on ResNets \cite{he2016deep}, MobileNet v2 \cite{sandler2018mobilenetv2} and EfficientNets \cite{tan2019efficientnet} trained on ImageNet \cite{ImageNet_cvpr09}. Folded networks, with the naive and the proposed method, the resulting networks share identical predictions with the original model. Similarly to Cifar10, the results were obtained on an Intel m3 CPU.
Standard deviations values range between $1.14$ for the least accelerated models and $12.68$ for the most accelerated models. As expected, the proposed method systematically outperforms the naive approach on all runs.

In terms of proportion of removed parameters, the results are comparable to our results on Cifar10 which demonstrates that, even for more challenging tasks it matters more to remove a few meaningful parameters (e.g. BN layers) than a lot of parameters. This is based on the results obtained with pruning methods such as \cite{frankle2018lottery,lin2020hrank}.
We also note that the correlation between depth and acceleration due to BN-folding on ResNets may be a coincidence.

\subsection{Stability over Hardware}
\paragraph{Central Processing Unit (CPU) comparison:}
Our results, in Table \ref{tab:gpu}, show the stability of the inference boost across different CPUs.
We also note that the maximum standard deviation on the results drops from $13.65$ to $2.06$ on the larger CPU. These results show the difficulties to work on very small devices for edge computing and the importance of efficient acceleration processes.

\begin{table}[!t]
\caption{Comparison of relative runtime reduction between CPU and GPU devices, on different hardware and different architectures trained on Cifar10 and ImageNet.}
\label{tab:gpu}
\centering
\setlength\tabcolsep{4pt}
\renewcommand{\arraystretch}{1.15}
\begin{tabular}{c|c|c|c}
\hline
Model & CPU m3 & CPU i9-9900K & RTX 3090 \\
\hline
\hline
ResNet 20       & 35.04 & 38.85 & 22.03\\
ResNet 56       & 50.97 & 49.37 & 47.33\\
ResNet 110      & 54.36 & 51.20 & 40.86\\
ResNet 164      & 39.20 & 48.16 & 56.94\\
\hline
MobileNet       & 29.01 & 27.76 & 69.14\\
\hline
ResNet 50       & 6.48  & 18.54 & 72.83\\
ResNet 101      & 10.34 & 16.77 & 47.89\\
ResNet 152      & 21.31 & 17.82 & 40.97\\
\hline
EfficientNet B0 & 18.27 & 20.74 & 52.72\\
EfficientNet B1 & 35.95 & 20.51 & 43.92\\
EfficientNet B5 & 21.38 & 11.80 & 49.60\\
\hline
\end{tabular}
\end{table}

\paragraph{Graphical Processing Unit (GPU) comparison:} Table \ref{tab:gpu} presents the comparison between CPU and GPU devices response to BN folding.
In contrast with CPU devices, GPU are very sensitive to sequential computations and very efficient on parallel operations. This property translates as significantly larger inference speed-up on GPU when compared to CPU.
As expected from a larger device, the standard deviations on the results represent less than $5.04$\% of the average value which shows the stability of the inference speed-up across different runs.

Furthermore, such results motivate the use of acceleration frameworks comprising BN folding for efficient inference on cloud servers using large GPUs.

\subsection{Other Acceleration Methods}
BN usually constitutes a complementary step in pruning and quantization processes. However it is not systematically applied. We put the emphasis on the efficiency of BN-folding in reaching the goal of DNN acceleration by comparing the cost in terms of accuracy in order to reach similar acceleration with pruning and quantization methods.

\begin{figure}[!t]
    \centering
    \includegraphics[width = 0.9\linewidth]{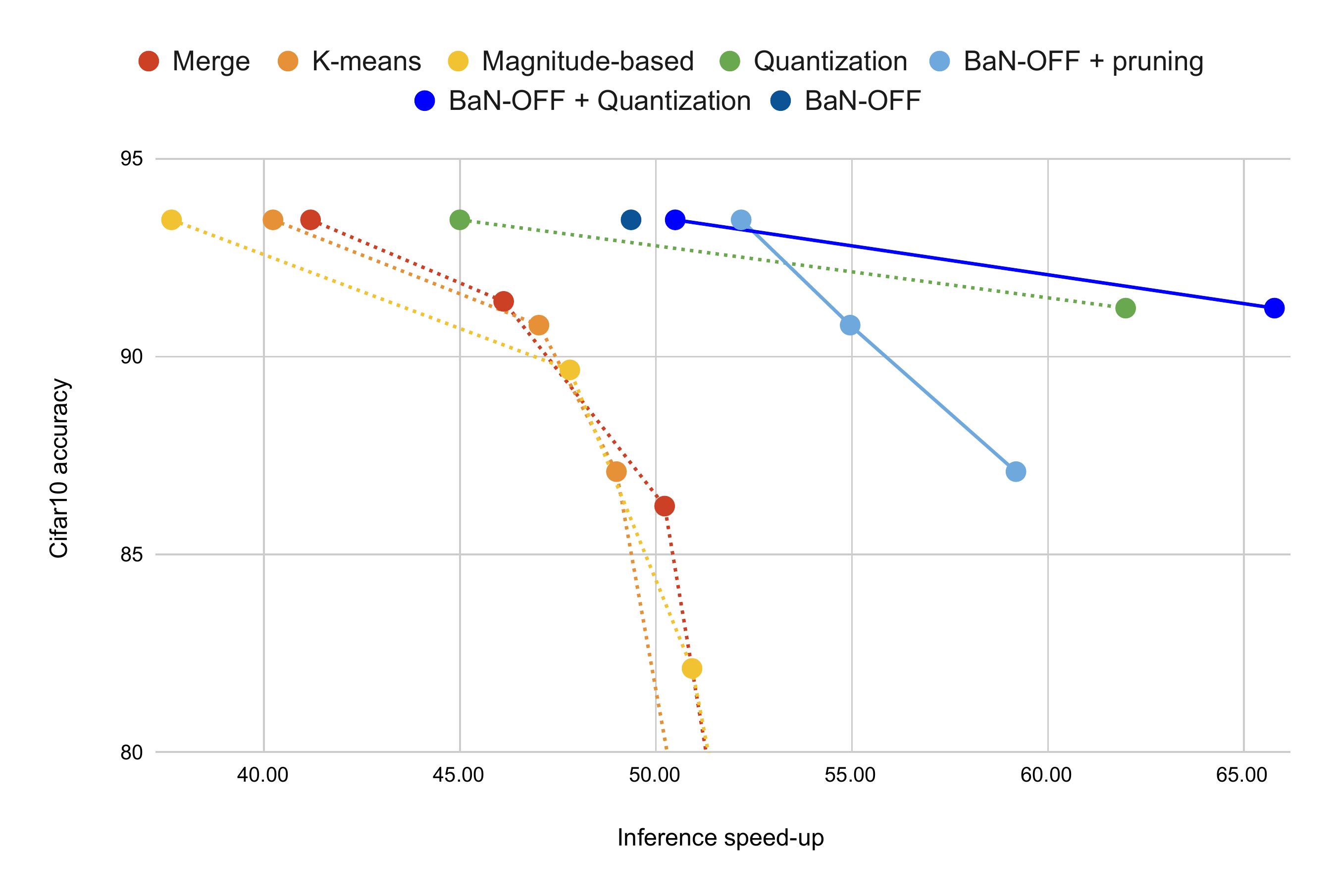}
     \caption{Comparison of different acceleration method on Cifar10 ResNet 56. Pruning and Quantization alone come at a cost in accuracy and need complex methodologies. Meanwhile, BaN-OFF comes at no cost and offer great improvements in runtime at no cost.}
    \label{fig:comparison_pruning_quantization}
\end{figure}
\paragraph{Pruning:} Figure \ref{fig:comparison_pruning_quantization} shows the comparison of the influence of pruning (merge step from \cite{yvinec2021red}) alone and BN folding on a ResNet 56 trained on Cifar10. For the sake of comparison, we implemented three classical pruning methods: Merge (which consists in merging similar neurons \cite{yvinec2021red}), K-means (which consists in clustering neurons by their weight values) and magnitude-based unstructured pruning (which consists in removing weight values based on their absolute value). These methods allows inference speed-ups up to $50.23\%$ but often significantly degrade the model accuracy. In comparison, the proposed BAN-OFF method allows similar runtime reduction without altering the predictive function. This suggests that, while the community focused on removing a lot of parameters whereas removing a few parameters under the form of BN layers folding can save a lot of computations by itself. Furthermore, BN folding and such pruning techniques can be combined for improved runtime reduction (up to $59.2 \%$).

%While structured pruning (merge and k-means) allows for different trade-offs between accuracy and runtime reduction it appears to offer benefits similar to BN at the expense of modifying the prediction function. Similarly, unstructured (magnitude-based) pruning offers improvements on par with BN folding due to the difficulty to leverage sparse matrix computation. The acceleration resulting from BaN-OFF, at virtually no cost, stacks over the results from pruning methods which demonstrates its significant added-value.

\paragraph{Quantization:} We considered the naive quantization operator introduced in \cite{krishnamoorthi2018quantizing}. Such methods are usually more effective than pruning in terms of inference speed-up, which is confirmed in Figure \ref{fig:comparison_pruning_quantization}. Similarly to the previous results, BN folwing and quantization can be used in conjunction for improved runtime reduction ($50.5-65.8\%$).

%Nonetheless, we show the relatively close gap between quantization and BN folding benefits.
%Similarly to pruning, quantization benefits from BaN-OFF at no cost which shows that BaN-OFF can be a major step of any acceleration process.

\subsection{Other Data Applications}
\begin{table}[!t]
\caption{Results of the proposed method on object detection methods trained on MS COCO. We report the percentage of relative reduced inference time of both methods for Faster RCNN (MobileNet backbone), Mask RCNN (ResNet backbone), EfficientDet D0, D7 and CenterNet with a stacked hourglass.}
\label{tab:detec}
\centering
\setlength\tabcolsep{4pt}
\renewcommand{\arraystretch}{1.15}
\begin{tabular}{c|c|c|c}
\hline
Model & Naive & BaN-OFF & improvement \\
\hline
\hline
Faster rcnn (mob) & 51.78 & \textbf{52.40} & +0.63\\
Mask rcnn (res)   & \textbf{55.37} & \textbf{55.37} & +0.00\\
EfficientDet D0	  & 51.72 & \textbf{53.93} & +2.21\\
EfficientDet D7	  & 44.36 & \textbf{45.73} & +1.37\\
CenterNet         & 52.86 & \textbf{53.55} & +0.69\\
\hline
\end{tabular}
\end{table}
%under the Creative Commons Attribution 4.0 License
\paragraph{MS COCO:} Table \ref{tab:detec} summarizes our results on the detection task of MS COCO \cite{lin2014microsoft} on an RTX 3090. We considered widely used architectures such as Faster RCNN \cite{ren2015faster}, Mask RCNN \cite{he2017mask}, EfficientDet \cite{tan2020efficientdet} and CenterNet \cite{zhou2019objects}.
Similarly to ImageNet, the folded models and the original models share the same predictions and the same accuracy. We see results very comparable to the results from table \ref{tab:gpu} (column RTX 3090) which is understandable as the detection algorithm are very similar, in the sense of the presence of BN layers, to their corresponding backbones.

\begin{table}[!t]
\caption{Results of the proposed method on object detection methods trained on CityScapes. We report the percentage of relative reduced inference time of the two methods for U-Net with ResNet 34 and Efficient B0 backbones, FPN and PSPN with ResNet 34 backbone.}
\label{tab:seg}
\centering
\setlength\tabcolsep{4pt}
\renewcommand{\arraystretch}{1.15}
\begin{tabular}{c|c|c|c}
\hline
Model & run-T & BaN-OFF & improvement \\
\hline
\hline
U-Net (Res34) & 57.89 & \textbf{59.95} & +2.06\\
U-Net (effB0) & 59.77 & \textbf{62.86} & +3.09\\
FPN (Res34)   & 52.13 & \textbf{54.80} & +2.67\\
PSPN (Res34)  & 53.01 & \textbf{55.14} & +2.14\\
\hline
\end{tabular}
\end{table}

\paragraph{CityScapes:} From our results, listed in Table \ref{tab:seg}, on the semantic segmentation task of CityScapes \cite{cordts2016cityscapes}, we observe similar results to object detection. On both FPN \cite{lin2017feature} and PSPN \cite{zhao2017pyramid} models, the proposed method achieves remarkable results with 55\% reduction of the inference time on an RTX 3090.
Similar results are obtained on U-Net \cite{ronneberger2015u} with an EfficientNet B0 backbone with great stability across runs (standard deviation below 2 points).
Finally, on U-Net with ResNet 34 backbone, we reach our most remarkable result of almost 63\% reduction of latency with a standard deviation below 1 point.
\section{Conclusion}
In this work, we highlighted and addressed the limitations of existing batch normalization (BN) layer folding approaches. We argued that BN folding is often overlooked in the deep neural network acceleration literature, as it allows drastic runtime reduction at inference time, especially compared to other acceleration methods, e.g. pruning or quantization.

From a theoretical point of view, we proposed a novel necessary and sufficient condition for folding BN layers working for any architecture. We also derived a practical algorithm, dubbed BAN-OFF, for optimal BN folding. BAN-OFF allows to significantly reduce the inference time of any deep neural network regardless of the task at hand (shall it be image classification, object detection or semantic segmentation), the deep network architecture or the hardware of which it shall run on, at virtually no cost.

For reproducibility concerns, the source code will be available.

\bibliographystyle{named}
\bibliography{ijcai22}

\end{document}